\documentclass[final]{l4dc2026}


\title[Provably Safe Stein Variational Clarity-Aware Informative Planning]{Provably Safe Stein Variational Clarity-Aware Informative Planning}
\usepackage{times}



\author{%
 \Name{Kaleb Ben Naveed}\thanks{Equal contribution.} \Email{kbnaveed@umich.edu}\\
 \addr Department of Robotics, University of Michigan, Ann Arbor
 \AND
 \Name{Utkrisht Sahai}\footnotemark[1] \Email{usahai@umich.edu}\\
 \addr Department of Robotics, University of Michigan, Ann Arbor%
 \AND
 \Name{Anouck Girard} \Email{girarda3@erau.edu}\\
 \addr Embry-Riddle Aeronautical University%
 \AND
 \Name{Dimitra Panagou} \Email{dpanagou@umich.edu}\\
 \addr Department of Robotics, Department of Aerospace Engineering, University of Michigan, Ann Arbor%
}
\usepackage[font=scriptsize,labelfont=bf]{caption}

\usepackage{lipsum}
\hypersetup{
  colorlinks,
  citecolor=teal,
  linkcolor=teal,
  urlcolor=purple,
}
\newcommand{\gatekeeper}{\texttt{gatekeeper}}

\newcommand{\reals}{\mathbb{R}}

\newcommand{\R}{\reals}

\newcommand{\Rplus}{\reals_{>0}}

\newcommand{\naturals}{\mathbb{N}}
\renewcommand{\S}{\mathbb{S}}

\newcommand{\pd}{\S_{++}}

\newcommand{\Bcal}{\mathcal{B}}
\newcommand{\Ccal}{\mathcal{C}}

\newcommand{\Ocal}{\mathcal{O}}
\newcommand{\Pcal}{\mathcal{P}}
\newcommand{\Qcal}{\mathcal{Q}}

\newcommand{\Scal}{\mathcal{S}}
\newcommand{\Tcal}{\mathcal{T}}
\newcommand{\Ucal}{\mathcal{U}}

\newcommand{\Xcal}{\mathcal{X}}

\newcommand{\norm}[1]{\left\Vert #1 \right \Vert}

\DeclareMathOperator*{\argmin}{arg\,min}









\usepackage{xcolor, soul}
\definecolor{OliveGreen}{cmyk}{0.17,0,0.41,0.13}
\usepackage{xcolor}
\definecolor{darkgreen}{RGB}{0,100,0} 
\definecolor{tealgreen}{RGB}{0,128,105}






\begin{document}


\maketitle

\begin{abstract}%
Autonomous robots are increasingly deployed for information-gathering tasks in environments that vary across space and time. Planning informative and safe trajectories in such settings is challenging because information decays when regions are not revisited. Most existing planners model information as static or uniformly decaying, ignoring environments where the decay rate varies spatially; those that model non-uniform decay often overlook how it evolves along the robot’s motion, and almost all treat safety as a soft penalty. In this paper, we address these challenges.  We model uncertainty in the environment using clarity, a normalized representation of differential entropy from our earlier work that captures how information improves through new measurements and decays over time when regions are not revisited. Building on this, we present Stein Variational Clarity-Aware Informative Planning, a framework that embeds clarity dynamics within trajectory optimization and enforces safety through a low-level filtering mechanism based on our earlier \gatekeeper{} framework for safety verification. The planner performs Bayesian inference-based learning via Stein variational inference, refining a distribution over informative trajectories while filtering each nominal Stein informative trajectory to ensure safety. Hardware experiments and simulations across environments with varying decay rates and obstacles demonstrate consistent safety and reduced information deficits.
\href{https://usahai18.github.io/stein_clarity/}{[Paper Website]}\footnote{Hardware and simulations experiment videos: \url{https://usahai18.github.io/stein_clarity/}}.
\end{abstract}

\begin{keywords}%
  Safe Informative Planning, Variational Inference 
\end{keywords}

\section{Introduction}
Autonomous robots are increasingly used in environmental monitoring~\citep{env_monitoring_2, naveed2025multi}, search and rescue~\citep{search_2}, and 3D reconstruction~\citep{reconstruction_1}, where they must actively gather information to estimate unknown fields such as wind, temperature, or gas concentration. Unlike passive sensing, this requires \textit{informative planning}, a sequential process that determines where and when to sample to reduce uncertainty. The main challenge is to plan trajectories that (i) account for spatially varying information decay rates in the environment, and (ii) remain provably safe with respect to obstacles while exploring informative regions. In this work, we focus on environments characterized by such spatially varying information decay, referred to as \textit{stochastic spatiotemporal environments}.

Existing informative planning approaches, including orienteering-based~\citep{bottarelli2019orienteering}, submodular~\citep{meliou2007nonmyopic}, sampling-based~\citep{moon2025ia}, and GP-based planners~\citep{Chen-RSS-22}, seek to maximize information gain but often assume static fields or model information decay with a constant global parameter. Ergodic exploration~\citep{mathew2011metrics, dressel2019tutorial} offers an alternative by matching the visitation frequency to a target distribution, yet most variants neglect temporal evolution~\citep{seewald2024energy, dong2025time, lee2024stein}. The recent extension in~\citep{naveed2024eclares} addressed spatiotemporal settings but computed the target distribution independently of the robot’s motion, failing to capture uncertainty evolution along its trajectory. Finally, existing methods lack provable safety guarantees, since penalty-based approaches~\citep{lee2024stein} cannot ensure constraint satisfaction, while CBF-based methods~\citep{ames2016control, dong2025time}, although formally safe, remain difficult to design and tune for high-dimensional systems.

We address these challenges by proposing a provably safe, clarity-aware Stein variational informative planning framework. Clarity, introduced in our earlier work~\citep{agrawal2023sensor}, rescales differential entropy to $[0,1]$, making uncertainty easier to interpret and computationally tractable. Using clarity, we explicitly model how information decays and regenerates over time, enabling the robot to reason about the spatiotemporal evolution of uncertainty. We then define a differentiable clarity-based objective and optimize it within a Stein variational inference formulation~\citep{liu2016stein, lambert2020stein}, a particle-based Bayesian learning method that operates in function space to approximate the posterior distribution over trajectories through gradient flows. This formulation allows multiple trajectory candidates to evolve in parallel, enabling the planner to learn and maintain a diverse set of informative motion strategies. To ensure safety, we integrate the \gatekeeper{} framework~\citep{agrawal2024gatekeeper, naveed2025enabling}, where each Stein particle generates a nominal trajectory concatenated with a backup trajectory to form a \textit{candidate trajectory}. A candidate is committed if forward propagation verifies that all states remain within the safe set and the backup terminates inside the backup set. Among safe candidates, the lowest-cost one is executed. As compared to existing methods, our \textbf{contributions} are twofold:
\textcolor{magenta}{\textbf{(i)}} We develop a clarity-aware Stein variational informative planner that learns an approximate posterior distribution over informative trajectories through Bayesian inference, explicitly accounting for spatially varying information decay rates in the environment and the evolution of clarity along the robot’s motion.
\textcolor{magenta}{\textbf{(ii)}} We make the planner provably safe through a low-level \gatekeeper{}-based safety filter that verifies nominal Stein trajectories and ensures that only trajectories certified as safe are committed for execution.

\section{Preliminaries}
\subsection{Notation}
Let $\naturals = \{ 0, 1, 2, ... \} $. Let $\mathbb{R}$, $\mathbb{R}_{\geq 0}$, $\mathbb{R}_{> 0}$ be the set of reals, non-negative reals, and positive reals respectively. Let $\mathcal{S}^{n}_{++}$ denote set of symmetric positive-definite in $\mathbb{R}^{n \times n}$. Let $\mathcal{N}(\mu, \Sigma)$ denote a normal distribution with mean $\mu$ and covariance $\Sigma \in \mathcal{S}^{n}_{++}$. The $Q \in \pd^{n}$ norm of a vector $x \in \R^n$ is denoted $\norm{x}_Q = \sqrt{x^T Q x}.$ Let $\Ccal^i$ denote the space of functions that are $i$ times continuously differentiable with respect to their arguments.

\subsection{Dynamics Model}

Consider the continuous-time robot dynamics
\begin{align}
\dot{x} = f(t,x,u),
\label{eq:ctrlaffine_sys}
\end{align}
with the state \(x \in \Xcal \subset \R^n\), the input \(u \in \Ucal \subset \R^m\), and vector field
\(f:\R \times \Xcal \times \Ucal \to \R^n\).
We assume \(f\) is locally Lipschitz in \(x\) and \(u\) and continuously differentiable in \((x,u)\) on
the domain $\Xcal \times \Ucal$. Given a feedback policy $u = \pi(t,x)$ with $\pi$ continuous in $t$ and locally Lipschitz in $x$, the closed-loop system admits a unique solution over some interval.
\begin{definition}[Trajectory]
\label{def:traj}
Let \(\Tcal=[t_i,t_f]\subset\R\).
A \emph{trajectory} is a pair of functions $\xi = ( \xi_x:\Tcal\to\Xcal, \xi_u:\Tcal\to\Ucal)$, satisfying
$
\dot{\xi
}(t)=f\big(t, \xi
_x(t), \xi
_u(t)\big), \quad \forall t\in(t_i,t_f),
\quad \xi
_x(t_i)=x_i .
$
The set of all trajectories starting from $(t, x) \in \R \times \Xcal$ is denoted
$
\Phi(t, x)
=
\big\{\xi
 = (\xi_x, \xi_u): \xi_x(t)=x \, \text{ and } \,(\xi_x, \xi_u) \text{ is } \text{ a } \text{ trajectory}\}.
$
\end{definition}

Let $\Scal: \R \rightrightarrows \Xcal$ denote the set of states satisfying the constraints. The system satisfies the constraints if $x(t) \in \Scal(t), \forall t \geq t_0$. 
\subsection{Clarity}

We use clarity, introduced in \cite{agrawal2023sensor}, to quantify uncertainty based on differential entropy.

\begin{definition} The \textit{differential entropy} $h[Z] \in (-\infty, \infty)$ of a continuous random variable $Z$ with support $S$ and density $\rho : S \rightarrow \mathbb{R}$ is
\begin{equation}
    h[Z] = -\int_S \rho(z) \log \rho(z) dz.
\end{equation}
\end{definition}
As the uncertainty in $Z$ increases, the entropy approaches $h[Z] \to \infty$. Clarity is defined in terms of differential entropy.
\begin{definition}
Let $Z$ be an $n$-dimensional continuous random variable with differential entropy $h[Z]$. The \textit{clarity} of $Z$ is a normalized quantity $q[Z] \in (0, 1)$ of $Z$ is defined as
\begin{equation}
    q[Z] = \left( 1 + \frac{\exp(2h[Z])}{(2\pi e)^n} \right)^{-1}.
\end{equation}
\end{definition}

In other words, the clarity $q[Z]$ of a random variable $Z$ lies in the interval $(0,1)$, where $q[Z] \to 0$ corresponds to the case where the uncertainty in $Z$ is infinite, and $q[Z] \to 1$ corresponds to the case where $Z$ is perfectly known. See \textit{Example 1} in \cite{agrawal2023sensor}.


\subsection{Stein Variational Trajectory Optimization}

Model Predictive Control (MPC) computes control inputs by solving a finite-horizon optimization problem over a time interval $[t_i, t_f]$ given an initial state $x_i$:
\begin{align}
\min_{(\xi_x, \xi_u) \in \Phi(t_i, x_i)} C(\xi_x, \xi_u),
\label{eq:mpc_opt}
\end{align}
where $C : \Phi(t_i, x_i) \rightarrow \R$ is the trajectory cost functional, and $\Phi(t_i, x_i)$ is the set of admissible trajectories originating from the initial state $x_i$ at time $t_i$ and satisfying the system dynamics and constraints defined in \textit{Def.}~\ref{def:traj}.
Classical MPC returns a single deterministic optimum, which limits its ability to reason over multi-modal solutions that arise from nonconvex costs or constraints.

\paragraph{Bayesian View of Trajectory Optimization.}
Stein Variational Trajectory Optimization reformulates~\eqref{eq:mpc_opt}  
as a Bayesian inference problem over control trajectories $\xi_u$,  
whose corresponding state trajectories $\xi_x$ satisfy $(\xi_x, \xi_u) \in \Phi(t_i, x_i)$.  
Let $\Ocal_{\xi} \in \{0,1\}$ be a binary random variable that indicates whether a control trajectory $\xi_u$ is optimal ($\Ocal_{\xi}=1$) with respect to the cost $C(\xi_x, \xi_u)$.  
By Bayes' rule, the posterior distribution over dynamically feasible control trajectories, conditioned on the current state $x_i$, is given by
\begin{align}
p(\xi_u \mid \Ocal_{\xi}=1, x_i)
=
\frac{p(\Ocal_{\xi}=1 \mid \xi_u, x_i)\, p(\xi_u)}
{p(\Ocal_{\xi}=1 \mid x_i)}.
\label{eq:stein_post}
\end{align}
Here, $p(\xi_u \mid \Ocal_{\xi}=1, x_i)$ is the posterior over trajectories given that they are optimal,  
$p(\Ocal_{\xi}=1 \mid \xi_u, x_i)$ is the likelihood,  
$p(\xi_u)$ is the prior over admissible control trajectories in $\Phi(t_i, x_i)$,  
and $p(\Ocal_{\xi}=1 \mid x_i)$ is the marginal.  
The likelihood is modeled as an exponentiated cost function:
\begin{equation}
p(\Ocal_{\xi}=1 \mid \xi_u, x_i)
\propto
\exp(-\alpha\, C(\xi_x, \xi_u)), \qquad \alpha > 0,
\label{eq:stein_likelihood}
\end{equation}
where $\alpha$ controls the sharpness of the likelihood.  
This formulation assigns higher posterior probability to control trajectories with lower cost,  
while maintaining a belief distribution over the feasible trajectory set.

\paragraph{Variational Approximation.}
The posterior~(\ref{eq:stein_post}) can be approximated by minimizing the Kullback--Leibler (KL) divergence
\begin{equation}
q^\ast = \arg\min_{q \in \Qcal} 
D_{\mathrm{KL}}\!\big(q(\xi_u)\, \|\, p(\xi_u \mid \Ocal_{\xi}=1, x_i)\big),
\label{eq:stein_vi}
\end{equation}
where $\Qcal$ denotes the family of admissible trajectory distributions,
and $q(\xi_u)$ is an empirical distribution represented by a set of $K$ control-trajectory particles 
$\{\xi_u^{(k)}\}_{k=1}^K$.
Each particle $\xi_u^{(k)}$ corresponds to a
control sequence defined over the horizon $[t_i, t_f]$. The state trajectory $\xi_x^{(k)}$ is obtained by integrating the system dynamics~(\ref{eq:ctrlaffine_sys})
from the current state $x_i$ under the control sequence $\xi_u^{(k)}$.

\paragraph{Stein Variational Gradient Update.}
Stein Variational Gradient Descent (SVGD)~\citep{liu2016stein} 
deterministically transports a set of particles to minimize the KL divergence 
between their empirical distribution and the target posterior.
Each particle here represents a feasible control trajectory $\xi_u^{(k)}$, 
which is iteratively updated according to
\begin{equation}
\xi_u^{(k)} \leftarrow \xi_u^{(k)} + \varepsilon\, \phi^\ast(\xi_u^{(k)}),
\end{equation}
where $\varepsilon > 0$ is the step size.
The optimal update field $\phi^\ast(\cdot)$ defines the direction that maximally decreases the KL divergence and is given by
\begin{equation}
\phi^\ast(\xi_u^{(k)}) =
\frac{1}{K}
\sum_{j=1}^K
\Big[
k(\xi_u^{(j)},\xi_u^{(k)})\, 
\nabla_{\xi_u^{(j)}} \log p(\xi_u^{(j)} \mid x_i)
+ 
\nabla_{\xi_u^{(j)}} k(\xi_u^{(j)},\xi_u^{(k)})
\Big],
\label{eq:stein_svgd}
\end{equation}
where $k(\cdot,\cdot)$ is a positive-definite kernel defining the Reproducing Kernel Hilbert Space (RKHS) used to compute $\phi^\ast$. In the first term, the kernel weights each particle’s gradient by its similarity to others, pulling them toward high-posterior regions; the second term introduces a repulsive force that preserves diversity in the RKHS.
The log-posterior gradient is computed as
\begin{equation}
\nabla_{\xi_u^{(j)}} \log p(\xi_u^{(j)} \mid x_i)
=
-\alpha\, \nabla_{\xi_u^{(j)}} C(\xi_x^{(j)}, \xi_u^{(j)})
+ \nabla_{\xi_u^{(j)}} \log p(\xi_u^{(j)}),
\label{eq:stein_grad}
\end{equation}
where the first term depends on the trajectory cost and the second encodes any prior belief over control trajectories 
(often taken uniform when no prior preference is imposed).

\section{Problem Formulation}

\subsection{Environment Specification}
Consider the coverage space $\Pcal$. We discretize the domain into a set of $N_p$ cells each with size $V$.\footnote{Size is length in 1D, area in 2D, and volume in 3D.} Let $m_p: [t_0, \infty) \to \mathbb{R}$ be the (time-varying) quantity of interest at each cell $p \in \Pcal_{\text{cells}} = \{1, ..., N_p\}$. We model the quantities of interest as independent stochastic processes:
\begin{subequations}
\begin{align}
    \Dot{m}_p &= w_{p}(t), &&w_{p}(t) \sim \mathcal{N} (0, Q_{p}), \label{eqn: quantity_of_interest_process} \\
    y_p &= C_{p}(x) m_{p} + v_p(t),  &&v_p(t) \sim \mathcal{N} (0, R), \label{eqn: quantity_of_interest_measurement}
\end{align}
\label{eqn: quantity_of_interest}
\end{subequations}
where $y_p \in \R$ is the output corresponding to cell $p$. 
$R$ is the measurement noise variance, and $Q_p \in \Rplus$ is the process noise variance at each cell $p$.
Since the process noise $Q_p$ varies across cells, the field values $m_p$ evolve differently in time at each location, 
resulting in a \emph{stochastic spatiotemporal environment}. To quantify the uncertainty in the environment, we define an independent clarity dynamics for each cell $p \in \Pcal_{\text{cells}}$. The clarity dynamics derived in \cite{agrawal2023sensor} for each cell are given as follows:

\begin{equation}
\begin{aligned}
\dot{q_p}= \frac{C_p(x)^2}{R}(1 - q_p)^2 - Q_pq_p^2
\label{eqn:clarity_dynamics}
\end{aligned}
\end{equation}

where $x$ is the state of the robot, $C_p: \Xcal \to \R$ is the mapping between robot state and sensor state at cell $p \in \Pcal$, and $R \in \R$ is the known variance of the measurement noise. Here, $q_p \rightarrow 1$ represents the case when the state of the environment (e.g. smoke concentration) is perfectly known in the cell $p$, whereas lower values correspond to higher uncertainty.

\subsection{Problem Statement}
Consider a robotic system evolving under dynamics~(\ref{eq:ctrlaffine_sys}) within a stochastic spatiotemporal environment~(\ref{eqn: quantity_of_interest}).
Each cell $p \in \Pcal$ evolves with process noise $Q_p$, and its clarity state follows~(\ref{eqn:clarity_dynamics}).
A desired information level is specified as a \emph{target clarity} $\overline{q}_p < q_{\infty,p}$, where $q_{\infty,p}$ is the maximum attainable clarity, ensuring that the target is achievable within finite time.
Clarity is used as the information metric because it naturally reflects both environmental stochasticity and sensor characteristics: (i) the decay term $-Q_p q_p^2$ links the information decay rate directly to the process noise variance $Q_p$, and (ii) during sensing, $q_p$ increases monotonically and saturates at $q_{\infty,p} < 1$ for $Q_p, R(x) > 0$, guaranteeing an upper bound on attainable information.

The robot generates the informative trajectory in a receding-horizon manner: 
at each time $t_i$, it optimizes a feasible trajectory 
$\xi = (\xi_x, \xi_u) \in \Phi(t_i, x_i)$ over a finite horizon 
$[t_i, t_i + T_N]$, executes the first control input, and replans at the next step. The optimization problem is formulated as:
\begin{subequations}
\label{eq:clarity_opt}
\begin{align}
\min_{\xi \in \Phi(t_i, x_i)} \quad &
\frac{1}{N_p} \sum_{p=1}^{N_p} 
\int_{t_i}^{t_i + T_N}\max\big(0, \, \overline{q}_p - q_p(\tau)\big)d\tau
\label{eq:clarity_opt_obj}\\
\text{s.t.} \quad 
& \dot{\xi}_x = f(t, \xi_x(t), \xi_u(t)), \quad \forall t \in [t_i, t_i + T_N], \label{eq:clarity_opt_dyn}\\
& \dot{q}_p = g(\xi_x, q_p), \quad \forall p \in \Pcal_{\text{cells}}, \label{eq:clarity_opt_clarity}\\
& \xi_x(t) \in \Scal(t), \quad \forall t \in [t_i, t_i + T_N], \label{eq:clarity_opt_safe} \\
& \xi_x(t_i) = x_i \label{eq:dyn_init}
\end{align}
\end{subequations}
where $g(\cdot,\cdot)$ denotes the clarity dynamics~(\ref{eqn:clarity_dynamics}), 
and $\Scal(t) \subset \Xcal$ defines the time-varying safe set. 
Problem~(\ref{eq:clarity_opt}) seeks a feasible trajectory $\xi = (\xi_x, \xi_u)$ that respects system dynamics, clarity evolution, and safety constraints while minimizing the mean clarity deficit across the environment.




\section{Methodology}
\begin{figure*}[t]
  \centering
\includegraphics[width=1.0\columnwidth]{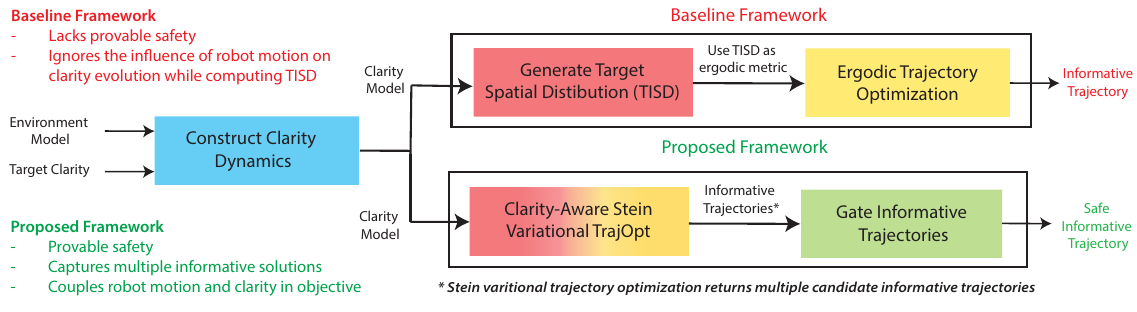}
  \caption{The baseline derives a TISD from the clarity model and uses it as an ergodic metric, whereas the proposed framework directly optimizes the clarity model via a Stein variational approach, producing multiple informative and provably safe trajectories.}
  \vspace{-10pt}
  \label{fig:block_diag}
\end{figure*}

\subsection{Overview}

The proposed framework, illustrated in Fig.~\ref{fig:block_diag}, builds upon clarity-aware ergodic search by removing the multi-stage approximation introduced by the Target Information Spatial Distribution (TISD).
In the baseline approach, the TISD for each cell is computed as the time required for its clarity to reach the target value, assuming the robot is co-located with that cell.
This neglects the fact that the robot can only influence clarity along its actual trajectory, breaking the coupling between motion and clarity evolution and yielding a static spatial objective that fails to reflect how information evolves during motion.

To address this, the proposed method directly couples the robot trajectory with the clarity dynamics through a Stein variational formulation.
Multiple clarity-aware trajectories are evolved in parallel by minimizing a differentiable clarity-based cost that captures the spatiotemporal evolution of information. Safety is enforced by gating the nominal Stein trajectories, where each is paired with a backup trajectory, forward propagated, and evaluated for safety over the entire horizon.
The safe candidate with the lowest cost is committed for execution.

\subsection{Clarity-Aware Stein Variational Trajectory Optimization}

The proposed framework performs trajectory optimization using a differentiable clarity-based cost. 
Since the Stein variational update requires gradients of the objective with respect to the trajectory, 
both the cost functional and the measurement model must be continuously differentiable.

\paragraph{Differentiable Clarity-Aware Cost Functional.}
The original mean clarity deficit objective~\eqref{eq:clarity_opt_obj} contains a non-differentiable hinge term 
$\max(0, \overline{q}_p - q_p)$, which prevents analytic gradient computation through the clarity model. 
We replace this hinge with a smooth \emph{softplus} surrogate:
\begin{equation}
\text{softplus}_\beta(z) = \frac{1}{\beta}\log\!\big(1 + e^{\beta z}\big), \qquad \beta > 0,
\label{eq:softplus}
\end{equation}
where larger $\beta$ values yield a closer approximation to $\max(0, z)$ while maintaining differentiability.  The resulting differentiable clarity-aware cost for a trajectory 
$\xi = (\xi_x, \xi_u) \in \Phi(t, x)$ over the horizon $[t_i, t_i + T_N]$ is
\begin{equation}
J_\beta(\xi)
=
\frac{1}{N_pT_N}
\sum_{p=1}^{N_p}
\int_{t_i}^{t_i + T_N}
\text{softplus}_\beta\Big(\overline{q}_p - q_p(\tau; \xi_x)\Big)
\, d\tau,
\label{eq:clarity_cost_soft}
\end{equation}
where $q_p(\tau; \xi_x)$ denotes the clarity at cell $p$ obtained by integrating the clarity dynamics~\eqref{eqn:clarity_dynamics} along the trajectory $\xi_x$.
The cost $J_\beta(\xi)$ is continuously differentiable in $\xi$ for any finite $\beta > 0$, 
allowing smooth gradient propagation through the clarity model and trajectory.

\paragraph{Differentiable Measurement Model.}
The measurement mapping $C(x)$ determines how the robot’s state influences clarity gain in each cell. 
To preserve differentiability of $J_\beta(\xi)$, $C(x)$ must be at least $\Ccal^1$ and locally Lipschitz in $x$. 
We model each cell-specific sensing footprint as a Gaussian field:
\begin{equation}
\label{eq:C_gaussian}
C_p(x)
=
\kappa_c \exp\Big(-\frac{1}{2}\|x - \mu_p\|^2_{\Sigma_c^{-1}}\Big),
\end{equation}
where $\mu_p \in \R^n$ is the center of cell $p$, $\Sigma_c \in \mathcal{S}^{n}_{++}$ defines the sensing footprint, 
and $\kappa_c > 0$ scales the sensor strength. 
This ensures that $C_p(x)$ and its gradient $\nabla_x C_p(x)$ are continuous and bounded, 
enabling differentiable backpropagation through the clarity dynamics~\eqref{eqn:clarity_dynamics}. 
Alternative smooth sensing models (e.g., radial basis or polynomial decay) are also admissible 
as long as $C_p(x)$ remains continuously differentiable.

\subsection{Safety filtering through \gatekeeper{} framework}

To ensure safety, we propose a method based on the \gatekeeper{} framework~\citep{agrawal2024gatekeeper} 
to filter and commit safe trajectories among the candidates generated by the 
Stein variational optimizer. 
At each decision time $t_i$, the optimizer produces $K$ nominal trajectories, 
each evolved over a nominal horizon $[t_i,\, t_i + T_N]$. 
Along each nominal trajectory, multiple switching times are defined; at each switching time, a candidate trajectory is constructed by concatenating a backup trajectory to the nominal segment.
All candidates are evaluated in parallel for safety and cost, 
ensuring that the executed trajectory remains within $\Scal(t)$ for all $t \ge t_0$. 
Figure~\ref{fig:gating_concept} illustrates the procedure for $K=4$.

\begin{definition}[Nominal Trajectory]
\label{def:nominal_traj}
At time $t_i$, the Stein variational optimizer generates a set of nominal trajectories
$\Xi = \{\xi^{(k)}\}_{k=1}^K$, where each 
$\xi^{(k)} = (\xi_x^{(k)}, \xi_u^{(k)}) \in \Phi(t_i, x_i)$
is a trajectory evolved under the system dynamics~\eqref{eq:ctrlaffine_sys} 
over the nominal horizon $[t_i,\, t_i + T_N]$. 
\end{definition}

\begin{figure*}[t]
  \centering
\includegraphics[width=1.0\columnwidth]{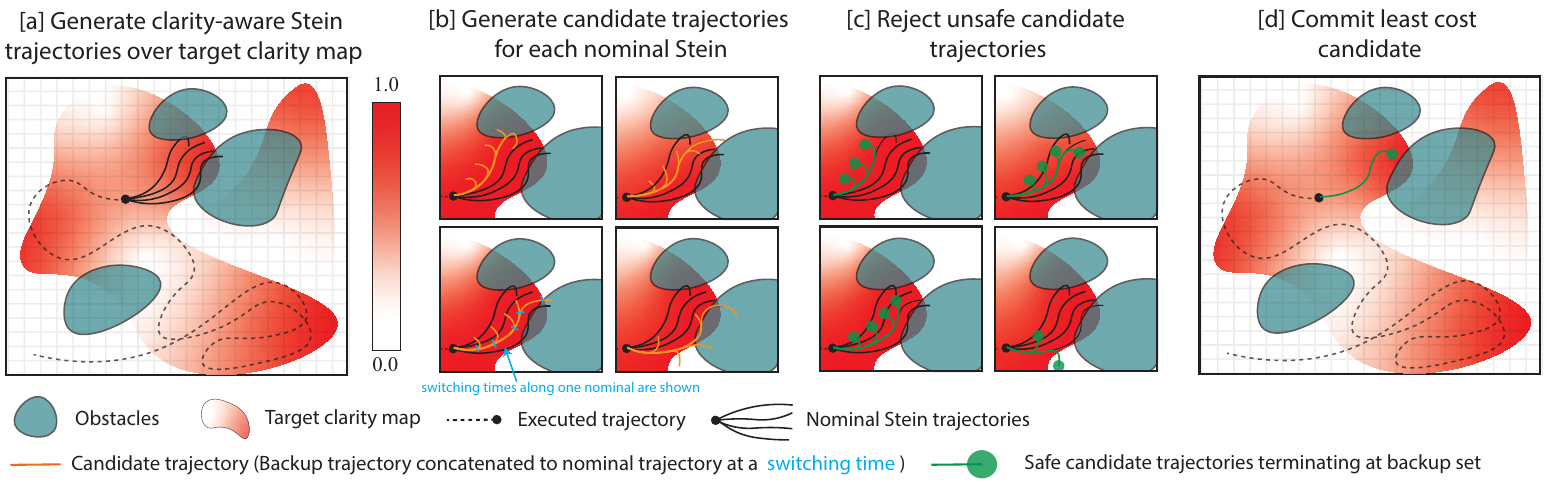}
  \caption{Safety filtering through the \gatekeeper{} framework. Candidate trajectories (shown in yellow) are generated in parallel for each nominal Stein trajectory. Unsafe candidate trajectories are rejected, and the least-cost safe candidate is executed.}
  \vspace{-10pt}
  \label{fig:gating_concept}
\end{figure*}
\begin{definition}[Backup Set]
\label{def:backup_set}
The \emph{backup set}, denoted $\Bcal(t) \subseteq \Scal(t)$, 
is a time-varying subset of the safe set such that there exists a 
\emph{backup controller} $\pi^\text{b}: \R_{\ge 0} \times \Xcal \to \Ucal$ 
for which the closed-loop dynamics 
$\dot{x} = f(t,x, \pi^\text{b}(t, x))$
satisfy the forward-invariance condition
\begin{equation}
x(t_0) \in \Bcal(t_0) \;\Rightarrow\; x(t) \in \Bcal(t), 
\quad \forall\, t \ge t_0.
\end{equation}
Hence, $\Bcal(t)$ is the region from which safety is guaranteed under $\pi^\text{b}$.
\end{definition}

\begin{definition}[Valid Backup Trajectory]
\label{def:backup_traj}
Given a nominal trajectory $\xi^{(k)}$ and a switching time $\tau_s \in [t_i,\, t_i + T_N]$, 
a \emph{backup trajectory} 
$\xi^\text{b} = (\xi_x^\text{b}, \xi_u^\text{b})$
is a feasible trajectory initialized from the nominal state at the switching time,
$
\xi_x^\text{b}(\tau_s) = \xi_x^{(k)}(\tau_s),
$
and evolved over a backup horizon $T_B$. 
The trajectory $\xi^\text{b}$ is \emph{valid} if it remains within the safe set and 
terminates in the backup set (\textit{Def}~\ref{def:backup_set}):
\begin{equation}
\xi_x^\text{b}(t) \in \Scal(t),
\quad \forall\, t \in [\tau_s,\, \tau_s + T_B],
\qquad
\xi_x^\text{b}(\tau_s + T_B) \in \Bcal(\tau_s + T_B).
\end{equation}
\end{definition}
\begin{definition}[Candidate Trajectory]
\label{def:candidate}
Given a nominal trajectory $\xi^{(k)} \in \Phi(t_i, x_i)$ 
and a switching time $\tau_s \in [t_i,\, t_i + T_N]$, 
a \emph{candidate trajectory} $\xi^{c_{(k,s)}} \in \Phi(t_i, x_i)$ 
is defined over the horizon $[t_i,\, \tau_s + T_B]$ as the concatenation of 
(1) the nominal segment, initialized at $x(t_i) = x_i$ and propagated under 
$\xi^{(k)}$ up to $\tau_s$, and 
(2) a backup trajectory $\xi^\text{b}$, initialized at 
$\xi_x^\text{b}(\tau_s) = \xi_x^{(k)}(\tau_s)$ and generated for a backup horizon $T_B$:
\begin{equation}
\xi^{c_{(k,s)}} = 
\big[\xi^{(k)}(t),\, t \in [t_i,\, \tau_s]\big] \oplus 
\big[\xi^\text{b}(t),\, t \in [\tau_s,\, \tau_s + T_B]\big].
\end{equation}
\end{definition}
\begin{definition}[Safe Candidate Trajectory]
\label{def:safe_candidate}
A candidate trajectory $\xi^{c_{(k,s)}}$ is \emph{safe} if all its states remain within 
the safe set and its concatenated backup is valid by \textit{Def}~\ref{def:backup_traj}:
\begin{equation}
\xi_x^{c_{(k,s)}}(t) \in \Scal(t),
\quad \forall\, t \in [t_i,\, \tau_s + T_B],
\quad \text{and} \quad
\xi^\text{b} \text{ is valid by \textit{Def}~\ref{def:backup_traj}}
\end{equation}
Let $\Phi_\text{safe}(t_i,x_i)$ denote the set of all safe candidates.
\end{definition}

\begin{definition}[Committed Trajectory]
\label{def:commit}
The least-cost safe candidate is committed for execution:
\begin{equation}
\xi^\star = 
\argmin_{\xi^{c_{(k,s)}} \in \Phi_\text{safe}(t_i,x_i)} 
J_\beta\big(\xi^{c_{(k,s)}}\big).
\end{equation}
If no safe candidate exists, 
the robot continues to follow the previously committed trajectory.
\end{definition}

\begin{theorem}
Assuming that at time $t_0$ a committed trajectory is available, and that at subsequent 
planning times $\{t_0, t_1, \dots, t_i, \dots\}$ the committed trajectories are 
constructed according to Def.~\ref{def:commit}, 
the closed-loop state remains safe for all future time, i.e.,
\begin{equation}
x(t) \in S(t), \quad \forall\, t \ge t_0. 
\end{equation}
\label{thm:global_safety}
\end{theorem}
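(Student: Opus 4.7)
The plan is to prove Theorem~\ref{thm:global_safety} by induction on the planning indices $\{t_0, t_1, t_2, \dots\}$, using the previously committed trajectory as a perpetual safety fallback whenever no new safe candidate is available at a replanning step. The two structural ingredients that make the induction go through are (i) the definition of a safe candidate (Def.~\ref{def:safe_candidate}), which guarantees that the nominal segment and backup segment of any committed trajectory lie in $\Scal(t)$ over the union of the nominal and backup horizons, and (ii) the forward invariance of the backup set under the backup controller $\pi^\text{b}$ (Def.~\ref{def:backup_set}), which guarantees that once a committed trajectory reaches $\Bcal$, the robot can remain in $\Bcal \subseteq \Scal$ indefinitely.

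For the base case at $t_0$, the assumption directly provides a committed trajectory $\xi^\star_0 \in \Phi_\text{safe}(t_0,x_0)$. By Def.~\ref{def:safe_candidate} and Def.~\ref{def:backup_traj}, its nominal-plus-backup horizon is contained in $\Scal(t)$ and terminates inside $\Bcal(\tau_s^{(0)} + T_B)$. If I then append the closed-loop trajectory produced by $\pi^\text{b}$ beyond $\tau_s^{(0)}+T_B$, forward invariance of $\Bcal(t)$ keeps the state in $\Bcal(t) \subseteq \Scal(t)$ for all $t \ge \tau_s^{(0)}+T_B$. Hence safety holds for all $t \ge t_0$ if no further replanning ever occurred.

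For the inductive step, I suppose that at planning time $t_i$ a committed trajectory $\xi^\star_i$ is available and that $x(t) \in \Scal(t)$ for all $t \in [t_0, t_i]$. At $t_{i+1}$, I distinguish two cases. In the first case, $\Phi_\text{safe}(t_{i+1}, x_{i+1})$ is nonempty, so by Def.~\ref{def:commit} a new safe candidate $\xi^\star_{i+1}$ is committed; by Def.~\ref{def:safe_candidate} this trajectory keeps the state in $\Scal(t)$ over its own nominal-plus-backup horizon, and beyond that horizon invariance of $\Bcal$ under $\pi^\text{b}$ extends safety to all $t \ge t_{i+1}$. In the second case, $\Phi_\text{safe}(t_{i+1}, x_{i+1})$ is empty; by Def.~\ref{def:commit} the robot continues to execute $\xi^\star_i$. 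Because $\xi^\star_i$ was certified safe at $t_i$, the state stays in $\Scal(t)$ on $[t_i, \tau_s^{(i)}+T_B]$, and after that moment the backup controller with forward invariance of $\Bcal(t)$ maintains $x(t) \in \Bcal(t) \subseteq \Scal(t)$ for all future time. In either case, the invariant $x(t) \in \Scal(t)$ is preserved through $t_{i+1}$ and beyond, closing the induction.

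The main obstacle is the bookkeeping around the second case: one must be careful that the previously committed trajectory remains valid across the new planning time $t_{i+1}$, even though its nominal horizon has partially elapsed. The key observation is that Def.~\ref{def:safe_candidate} certifies the entire concatenated trajectory (nominal followed by backup) over $[t_i, \tau_s^{(i)}+T_B]$, not just its nominal portion, and that Def.~\ref{def:backup_set} extends invariance beyond $\tau_s^{(i)}+T_B$. Hence $\xi^\star_i$ always furnishes an admissible safe continuation at $t_{i+1}$ regardless of whether a new candidate is found, which is exactly what lets the induction step carry the safety guarantee forward without ever relying on finding a new safe candidate at any particular replanning time.
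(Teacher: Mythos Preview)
Your proposal is correct and follows essentially the same approach as the paper's proof: induction over the planning times $\{t_i\}$, splitting each step into the case where a new safe candidate is committed and the case where the previously committed trajectory is retained. Your treatment is somewhat more explicit than the paper's in invoking the forward invariance of $\Bcal(t)$ under $\pi^\text{b}$ (Def.~\ref{def:backup_set}) to extend safety beyond the finite backup horizon, but the structure of the argument is the same.
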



\begin{proof}
The proof is provided in Appendix~\ref{appendix:proof_global_safety}.
\end{proof}

\section{Results \& Discussion}
\begin{figure*}[t]
  \centering
\includegraphics[width=1.0\columnwidth]{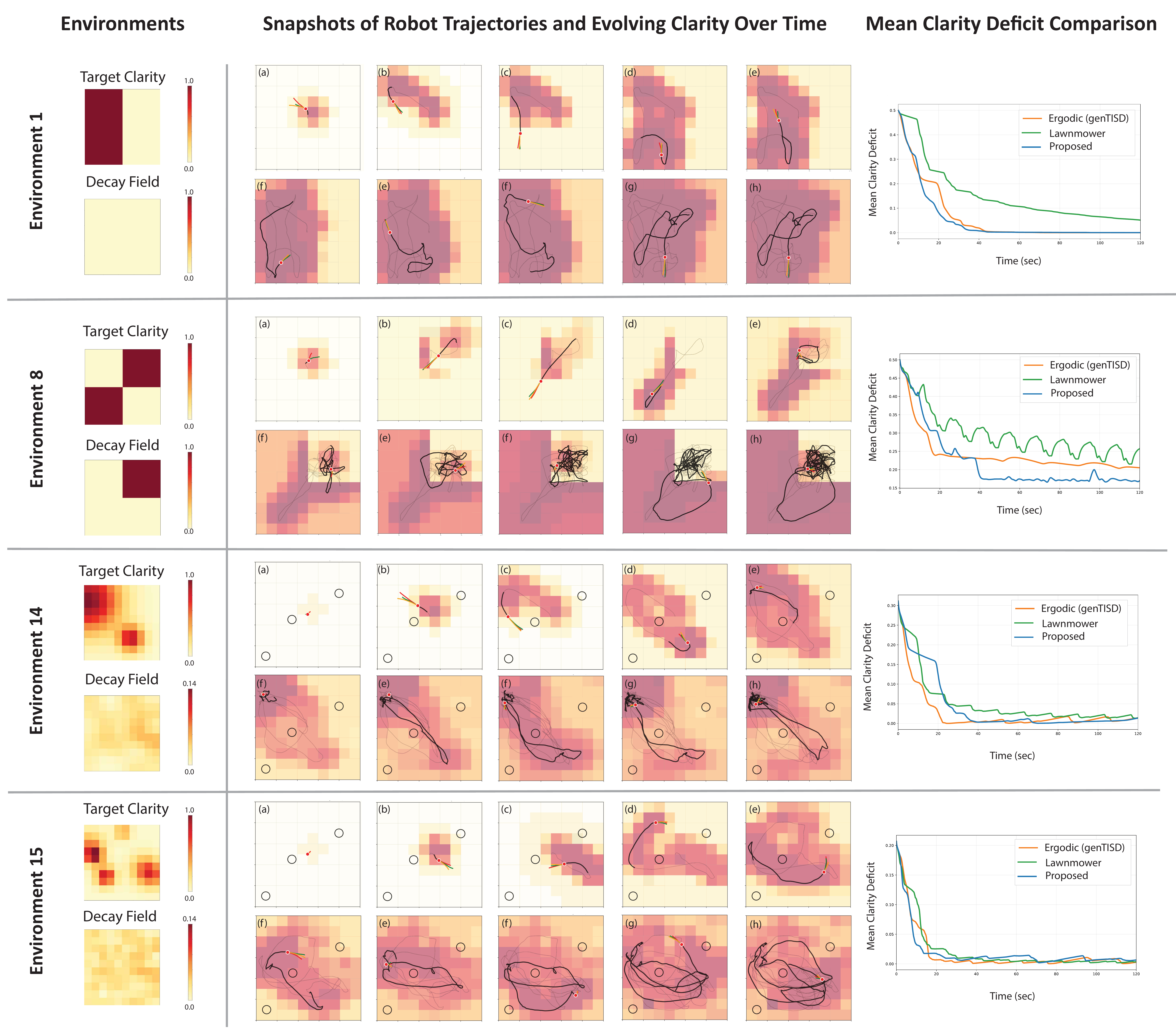}
  \caption{Planner behavior across representative environments. The planner maintains low clarity deficits and safe motion.}
  \vspace{-10pt}
  \label{fig:main_results}
\end{figure*}

To evaluate the proposed framework, we set up sixteen simulation environments covering different combinations of target clarity fields, information decay, and obstacles. These environments test the planner’s ability to explore informative regions, maintain clarity over time, and remain safe. We present four representative cases that illustrate the key behaviors observed across all environments. All sixteen environments are shown in Appendix~\ref{appendix:all_results}. We also demonstrate the proposed method through hardware experiments, as shown in Appendix~\ref{appendix:hardware_exps}.

\subsection{Planner Behavior Across Environments}
\paragraph{Environment~1:}
This environment has a static field with no information decay, as the process noise is zero. As shown in Fig.~\ref{fig:main_results}, the left half of the grid has higher target clarity, while the right half is zero. The robot first moves to the high target clarity region and increases clarity until the target is reached. Once the clarity deficit in that region approaches zero, the objective provides no further incentive to remain there, allowing the robot to move freely across the grid. Since there is no decay, it does not need to revisit previously explored areas.
\begin{table}[t!]
\centering
\scriptsize
\caption{Comparison of Methods Across Experiments}
\begin{tabular}{ccc}
\hline
\textbf{Method} & \textbf{No. of Experiments} & \textbf{Mean Safety Violations (\%)} \\
\hline
Proposed (\textcolor{red}{without gating}) & 100 across 4 envs & 2.94 \\
Proposed (\textcolor{darkgreen}{with gating}) & 100 across 4 envs & 0.0 \\
\hline
\end{tabular}
\label{tab:comparison_methods}
\end{table}

\paragraph{Environment~8:}
This environment contains both decaying and non-decaying regions, as shown in Fig.~\ref{fig:main_results}. The top-right and bottom-left quadrants have a target clarity of 1, while the others are 0. Decay is present only in the top-right region. The robot first moves to the bottom-left quadrant, reaching the target clarity quickly since clarity there does not decay. It then moves to the top-right, where decay causes clarity to decrease over time. The robot stays in this region to minimize the clarity deficit, illustrating how the planner adapts to temporal decay.

\paragraph{Environments~14 and~15:}
These environments contain obstacles near high-clarity regions (Figure~\ref{fig:main_results}) and test the planner’s ability to explore informative areas safely. The robot adapts its motion to the clarity deficit, reaching high-clarity regions while avoiding collisions. The integrated safety mechanism keeps all trajectories within the safe set, maintaining safety as the planner minimizes clarity deficit. Additional results are provided in Appendix~\ref{appendix:all_results}.

\subsection{Comparison of Mean Clarity Deficit and Safety Verification}

Figure~\ref{fig:main_results} compares the mean clarity deficit across methods, while table~\ref{tab:comparison_methods} reports safety performance. The lawnmower strategy performs worst, following a fixed sweeping pattern that cannot adapt to high clarity-deficit regions. Both the ergodic and Stein variational planners achieve lower deficits by redistributing coverage as the field evolves. Their information-gathering performance is comparable; however, the Stein approach is far more efficient. Generating Stein trajectories over a 6\,s horizon takes only 70\,ms with particle-based parallelization in JAX, while the \gatekeeper{} safety check adds about 40\,ms. In contrast, computing an ergodic trajectory of the same horizon via nonlinear optimization takes about 0.7\,s.

For the safety evaluation in table \ref{tab:comparison_methods}, we used four environments with obstacles and ran 100 randomized trials varying the robot’s initial position and obstacle size and location. Without the \gatekeeper{}, soft-penalty enforcement led to an average of 2.94\,\% safety violations, denoting the fraction of time in collision. With the \gatekeeper{} enabled, all runs were collision-free (0.0\,\%), confirming constraint satisfaction without loss of real-time performance.

\section{Conclusion}
We proposed a Stein variational clarity-aware informative planner that integrates Bayesian learning of informative trajectories with safety verification. Clarity, derived from differential entropy, provides a normalized representation of information quality that varies across space and time. Its dynamics capture how uncertainty decays as the robot moves through the environment. The Stein variational formulation learns diverse trajectory distributions by coupling motion with clarity dynamics, while the \gatekeeper{} ensures safety through real-time filtering. Simulations demonstrate safe operation, computational efficiency, and reduced information deficits. Future work will extend the framework to multi-robot settings and online adaptation under unknown process noise.

\acks{This work has been partially supported by the National Science Foundation (NSF) under Award Numbers 1942907 and 2223845.}

\bibliography{l4dc2026-sample}

\begin{thebibliography}{20}
\providecommand{\natexlab}[1]{#1}
\providecommand{\url}[1]{\texttt{#1}}
\expandafter\ifx\csname urlstyle\endcsname\relax
  \providecommand{\doi}[1]{doi: #1}\else
  \providecommand{\doi}{doi: \begingroup \urlstyle{rm}\Url}\fi

\bibitem[Agrawal and Panagou(2023)]{agrawal2023sensor}
Devansh~R Agrawal and Dimitra Panagou.
\newblock Sensor-based planning and control for robotic systems: Introducing clarity and perceivability.
\newblock \emph{IEEE Control Systems Letters}, 7:\penalty0 2623--2628, 2023.

\bibitem[Agrawal et~al.(2024)Agrawal, Chen, and Panagou]{agrawal2024gatekeeper}
Devansh~Ramgopal Agrawal, Ruichang Chen, and Dimitra Panagou.
\newblock gatekeeper: Online safety verification and control for nonlinear systems in dynamic environments.
\newblock \emph{IEEE Transactions on Robotics}, 2024.

\bibitem[Ames et~al.(2016)Ames, Xu, Grizzle, and Tabuada]{ames2016control}
Aaron~D Ames, Xiangru Xu, Jessy~W Grizzle, and Paulo Tabuada.
\newblock Control barrier function based quadratic programs for safety critical systems.
\newblock \emph{IEEE Transactions on Automatic Control}, 62\penalty0 (8):\penalty0 3861--3876, 2016.

\bibitem[Bottarelli et~al.(2019)Bottarelli, Bicego, Blum, and Farinelli]{bottarelli2019orienteering}
Lorenzo Bottarelli, Manuele Bicego, Jason Blum, and Alessandro Farinelli.
\newblock Orienteering-based informative path planning for environmental monitoring.
\newblock \emph{Engineering Applications of Artificial Intelligence}, 77:\penalty0 46--58, 2019.

\bibitem[Chen et~al.(2022)Chen, Khardon, and Liu]{Chen-RSS-22}
Weizhe Chen, Roni Khardon, and Lantao Liu.
\newblock {AK: Attentive Kernel for Information Gathering}.
\newblock In \emph{Proceedings of Robotics: Science and Systems}, New York City, NY, USA, June 2022.
\newblock \doi{10.15607/RSS.2022.XVIII.047}.

\bibitem[Dong et~al.(2025)Dong, Berger, and Abraham]{dong2025time}
Dayi~Ethan Dong, Henry Berger, and Ian Abraham.
\newblock Time-optimal ergodic search: Multiscale coverage in minimum time.
\newblock \emph{The International Journal of Robotics Research}, 44\penalty0 (10-11):\penalty0 1664--1683, 2025.

\bibitem[Dressel and Kochenderfer(2019)]{dressel2019tutorial}
Louis Dressel and Mykel~J Kochenderfer.
\newblock Tutorial on the generation of ergodic trajectories with projection-based gradient descent.
\newblock \emph{IET Cyber-Physical Systems: Theory \& Applications}, 4\penalty0 (2):\penalty0 89--100, 2019.

\bibitem[Hitz et~al.(2017)Hitz, Galceran, Garneau, Pomerleau, and Siegwart]{env_monitoring_2}
Gregory Hitz, Enric Galceran, Marie-{\`E}ve Garneau, Fran{\c{c}}ois Pomerleau, and Roland Siegwart.
\newblock Adaptive continuous-space informative path planning for online environmental monitoring.
\newblock \emph{Journal of Field Robotics}, 34\penalty0 (8):\penalty0 1427--1449, 2017.

\bibitem[Kompis et~al.(2021)Kompis, Bartolomei, Mascaro, Teixeira, and Chli]{reconstruction_1}
Yves Kompis, Luca Bartolomei, Ruben Mascaro, Lucas Teixeira, and Margarita Chli.
\newblock Informed sampling exploration path planner for 3d reconstruction of large scenes.
\newblock \emph{IEEE Robotics and Automation Letters}, 6\penalty0 (4):\penalty0 7893--7900, 2021.

\bibitem[Lambert et~al.(2020)Lambert, Fishman, Fox, Boots, and Ramos]{lambert2020stein}
Alexander Lambert, Adam Fishman, Dieter Fox, Byron Boots, and Fabio Ramos.
\newblock Stein variational model predictive control.
\newblock \emph{arXiv preprint arXiv:2011.07641}, 2020.

\bibitem[Lee et~al.(2024)Lee, Lerch, Ramos, and Abraham]{lee2024stein}
Darrick Lee, Cameron Lerch, Fabio Ramos, and Ian Abraham.
\newblock Stein variational ergodic search.
\newblock \emph{arXiv preprint arXiv:2406.11767}, 2024.

\bibitem[Li et~al.(2023)Li, Gao, Yang, and Quan]{search_2}
Yue Li, Yan Gao, Sijie Yang, and Quan Quan.
\newblock Swarm robotics search and rescue: A bee-inspired swarm cooperation approach without information exchange.
\newblock In \emph{2023 IEEE International Conference on Robotics and Automation (ICRA)}, pages 1127--1133. IEEE, 2023.

\bibitem[Liu and Wang(2016)]{liu2016stein}
Qiang Liu and Dilin Wang.
\newblock Stein variational gradient descent: A general purpose bayesian inference algorithm.
\newblock \emph{Advances in neural information processing systems}, 29, 2016.

\bibitem[Mathew and Mezi{\'c}(2011)]{mathew2011metrics}
George Mathew and Igor Mezi{\'c}.
\newblock Metrics for ergodicity and design of ergodic dynamics for multi-agent systems.
\newblock \emph{Physica D: Nonlinear Phenomena}, 240\penalty0 (4-5):\penalty0 432--442, 2011.

\bibitem[Meliou et~al.(2007)Meliou, Krause, Guestrin, and Hellerstein]{meliou2007nonmyopic}
Alexandra Meliou, Andreas Krause, Carlos Guestrin, and Joseph~M Hellerstein.
\newblock Nonmyopic informative path planning in spatio-temporal models.
\newblock In \emph{AAAI}, volume~10, pages 16--7, 2007.

\bibitem[Moon et~al.(2025)Moon, Suvarna, Jong, Chatterjee, Yuan, Cao, and Scherer]{moon2025ia}
Brady Moon, Nayana Suvarna, Andrew Jong, Satrajit Chatterjee, Junbin Yuan, Muqing Cao, and Sebastian Scherer.
\newblock Ia-tigris: An incremental and adaptive sampling-based planner for online informative path planning.
\newblock \emph{arXiv preprint arXiv:2502.15961}, 2025.

\bibitem[Naveed et~al.(2024)Naveed, Agrawal, Vermillion, and Panagou]{naveed2024eclares}
Kaleb~Ben Naveed, Devansh Agrawal, Christopher Vermillion, and Dimitra Panagou.
\newblock Eclares: Energy-aware clarity-driven ergodic search.
\newblock In \emph{2024 ieee international conference on robotics and automation (icra)}, pages 14326--14332. IEEE, 2024.

\bibitem[Naveed et~al.(2025{\natexlab{a}})Naveed, Agrawal, Cherenson, Lee, Gilbert, Parwana, Chipade, Bentz, and Panagou]{naveed2025enabling}
Kaleb~Ben Naveed, Devansh~R Agrawal, Daniel~M Cherenson, Haejoon Lee, Alia Gilbert, Hardik Parwana, Vishnu~S Chipade, William Bentz, and Dimitra Panagou.
\newblock Enabling safety for aerial robots: Planning and control architectures.
\newblock \emph{arXiv preprint arXiv:2504.08601}, 2025{\natexlab{a}}.

\bibitem[Naveed et~al.(2025{\natexlab{b}})Naveed, Lee, and Panagou]{naveed2025multi}
Kaleb~Ben Naveed, Haejoon Lee, and Dimitra Panagou.
\newblock Multi-robot allocation for information gathering in non-uniform spatiotemporal environments.
\newblock \emph{arXiv preprint arXiv:2509.22883}, 2025{\natexlab{b}}.

\bibitem[Seewald et~al.(2024)Seewald, Lerch, Chanc{\'a}n, Dollar, and Abraham]{seewald2024energy}
Adam Seewald, Cameron~J Lerch, Marvin Chanc{\'a}n, Aaron~M Dollar, and Ian Abraham.
\newblock Energy-aware ergodic search: Continuous exploration for multi-agent systems with battery constraints.
\newblock In \emph{2024 IEEE International Conference on Robotics and Automation (ICRA)}, pages 7048--7054. IEEE, 2024.

\end{thebibliography}

\newpage
\appendix
\section{Hardware Experiments}
\label{appendix:hardware_exps}

\begin{figure*}[t]
  \centering
\includegraphics[width=1.0\columnwidth]{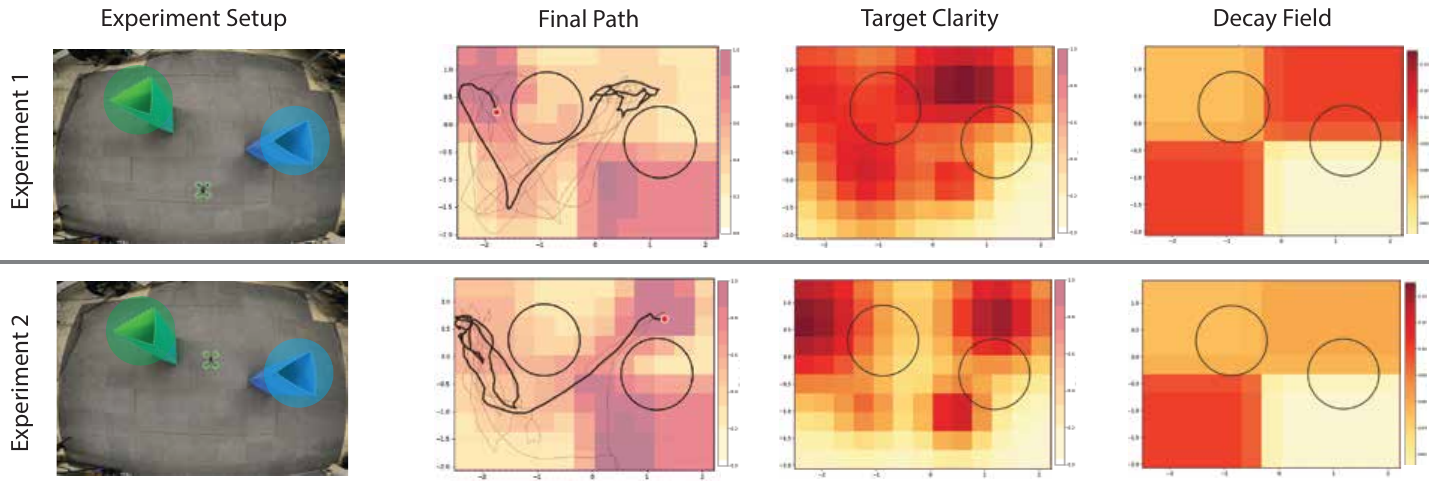}
  \caption{Hardware Experiments}
  \vspace{-10pt}
  \label{fig:Exp_hardware}
\end{figure*}

Two environments with different configurations of the target clarity and decay field were demonstrated. Experiments were conducted in an indoor arena with 17 Vicon cameras for state estimation. During planning, the physical obstacles were modeled as circular objects with appropriate padding. Stein trajectories with a 6\,s horizon were generated in a receding-horizon manner for the double-integrator model and tracked by the quadrotor using differential flatness. The average computation time per replanning step, including Stein trajectory generation and safety verification, was 100\,ms. The experiment videos can be found here:
\href{https://usahai18.github.io/stein_clarity/}{[Experiment Videos]}\footnote{Experiment Videos: \url{https://usahai18.github.io/stein_clarity/}}.

\section{Proof of Theorem~\ref{thm:global_safety}}
\label{appendix:proof_global_safety}
\begin{proof}
We prove the result by induction over the sequence of planning times 
$t_0, t_1, \dots, t_k, \dots$. 

\emph{Base case ($t_0$):}
By assumption, a committed trajectory $\xi_0^{\mathrm{com}}$ is available at time $t_0$.
Since $\xi_0^{\mathrm{com}} \in \Phi_{\mathrm{safe}}(t_0, x(t_0))$ by construction,
its numerically integrated state satisfies $x(t) \in S(t)$ for all $t \in [t_0, t_1]$.
Hence the system is safe during the first execution interval.

\emph{Inductive step:}
Assume that for some $k \ge 0$, the system remains safe over $[t_k, t_{k+1}]$
and that a committed trajectory $\xi_k^{\mathrm{com}}$ is available at $t_k$.
At time $t_{k+1}$, the next committed trajectory $\xi_{k+1}^{\mathrm{com}}$ 
is constructed according to Def.~\ref{def:commit}. 

If $\Phi_{\mathrm{safe}}(t_{k+1}, x(t_{k+1})) \neq \emptyset$,
then $\xi_{k+1}^{\mathrm{com}}$ is a validated safe trajectory whose state 
remains in $S(t)$ for all $t \in [t_{k+1}, t_{k+2}]$.
Otherwise, by the recursive property of Def.~\ref{def:commit}, 
the system continues executing the previously committed trajectory 
$\xi_k^{\mathrm{com}}$, which has already been validated to remain safe 
over its remaining horizon.
In both cases, $x(t) \in S(t)$ for all $t \in [t_{k+1}, t_{k+2}]$.

By induction, $x(t) \in S(t)$ for all $t \ge t_0$, 
establishing the global safety of the closed-loop system.
\end{proof}

\section{More Simulation Results}
\label{appendix:all_results}
\begin{figure*}[t]
  \centering
\includegraphics[width=1.0\columnwidth]{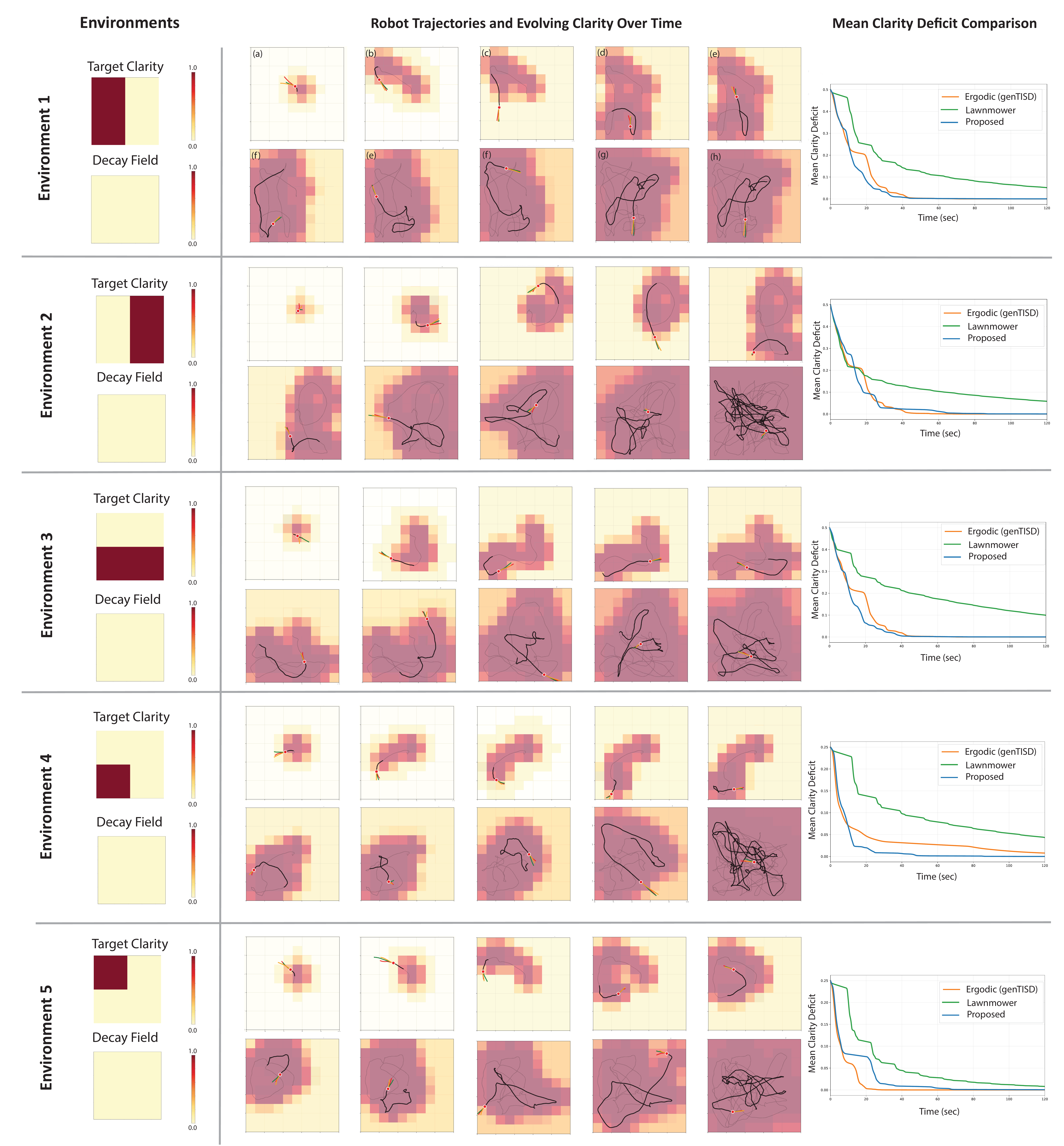}
  \caption{}
  \vspace{-10pt}
  \label{fig:Exp1_5}
\end{figure*}

\begin{figure*}[t]
  \centering
\includegraphics[width=1.0\columnwidth]{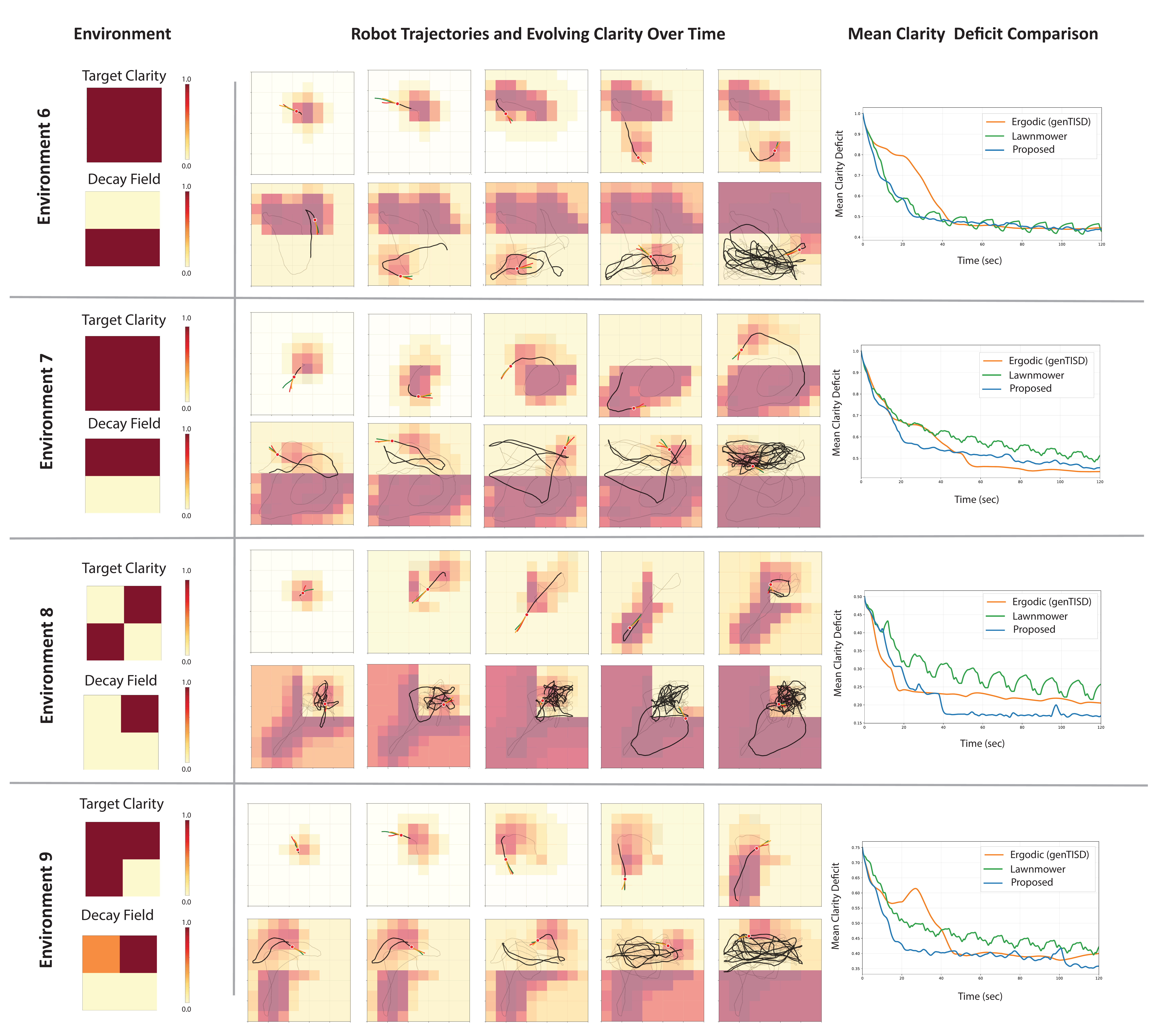}
  \caption{}
  \vspace{-10pt}
  \label{fig:Exp6_9}
\end{figure*}

\begin{figure*}[t]
  \centering
\includegraphics[width=1.0\columnwidth]{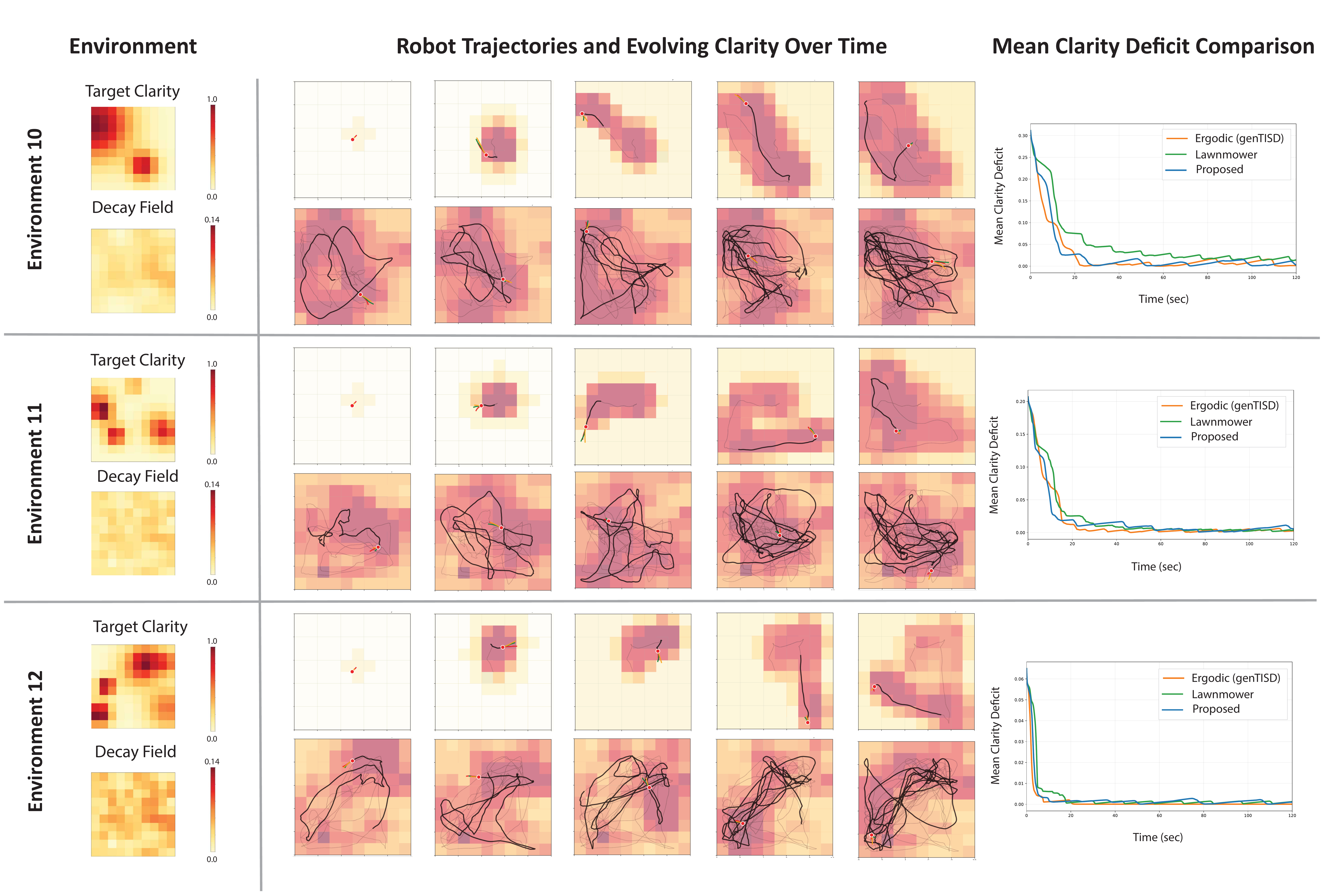}
  \caption{}
  \vspace{-10pt}
  \label{fig:Exp10_12}
\end{figure*}

\begin{figure*}[t]
  \centering
\includegraphics[width=1.0\columnwidth]{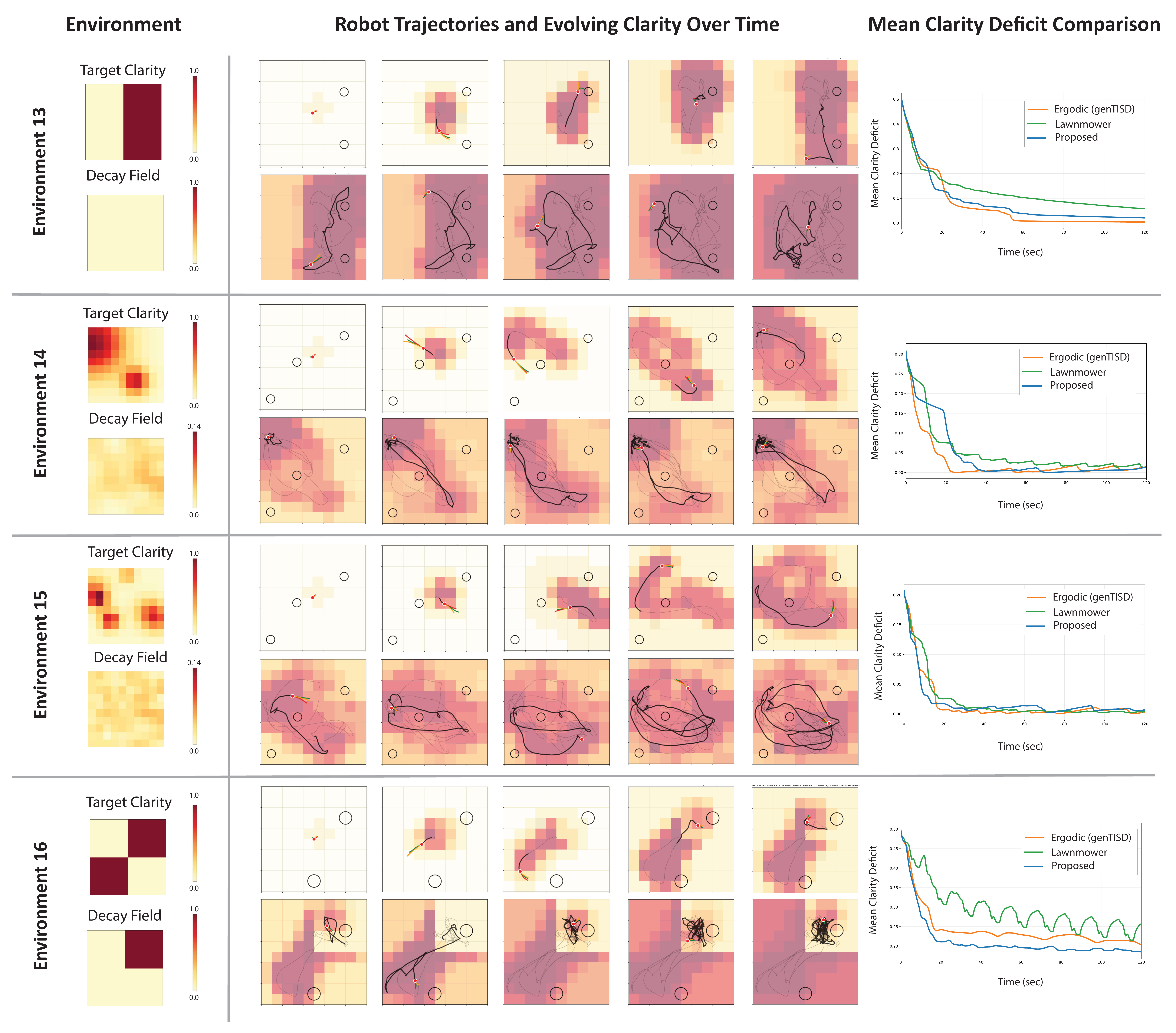}
  \caption{}
  \vspace{-10pt}
  \label{fig:Exp13_16}
\end{figure*}
\subsection{Environments}
\paragraph{Environment~1:}
This environment has a static field with no information decay, as the process noise is zero. As shown in Fig.~\ref{fig:Exp1_5}, the left half of the grid has higher target clarity, while in the right half its zero. The robot first moves to the high target clarity region and increases clarity until the target is reached. Once the clarity deficit in that region approaches zero, the objective provides no further incentive to remain there, allowing the robot to move freely across the grid. Since there is no decay, it does not need to revisit previously explored areas.
\paragraph{Environment~2:}
This environment has a static field with no information decay, as the process noise is zero. As shown in Fig.~\ref{fig:Exp1_5}, the right half of the grid has higher target clarity, while in the left half its zero. The robot first moves to the high target clarity region and increases clarity until the target is reached. Once the clarity deficit in that region approaches zero, the objective provides no further incentive to remain there, allowing the robot to move freely across the grid. Since there is no decay, it does not need to revisit previously explored areas.
\paragraph{Environment~3:}
This environment has a static field with no information decay, as the process noise is zero. As shown in Fig.~\ref{fig:Exp1_5}, the bottom half of the grid has higher target clarity, while in the top half its zero. The robot first moves to the bottom half and increases clarity until the clarity deficit in the region approaches zero; the objective provides no further incentive to remain there; the robot then moves freely across the grid so it explores the top half as well . Since there is no decay, it does not need to revisit previously explored areas.
\paragraph{Environment~3:}
This environment has a static field with no information decay, as the process noise is zero. As shown in Fig.~\ref{fig:Exp1_5}, the bottom half of the grid has higher target clarity, while in the top half its zero. The robot first moves to the bottom half and increases clarity until the clarity deficit in the region approaches zero; the objective provides no further incentive to remain there; the robot then moves freely across the grid so it explores the top half as well . Since there is no decay, it does not need to revisit previously explored areas.
\paragraph{Environment~4:}
This environment has a static field with no information decay, as the process noise is zero. As shown in Fig.~\ref{fig:Exp1_5}, the bottom left quadrant of the grid has a higher target clarity, while its zero elsewhere. The robot first moves to the bottom left and increases clarity until the clarity deficit approaches zero; the objective provides no further incentive to remain there; the robot then moves freely across the grid so it explores other parts as well. Since there is no decay, it does not need to revisit previously explored areas.
\paragraph{Environment~5:}
This environment has a static field with no information decay, as the process noise is zero. As shown in Fig.~\ref{fig:Exp1_5}, the top left quadrant of the grid has a higher target clarity, while its zero elsewhere. The robot first moves to the top left and increases clarity until the clarity deficit approaches zero; the objective provides no further incentive to remain there; the robot then moves freely across the grid so it explores other parts as well. Since there is no decay, it does not need to revisit previously explored areas.
\paragraph{Environment~6:}
This environment as shown in Fig.~\ref{fig:Exp6_9} has a decay field that has a high decay in the lower half and no decay in the top half. and has uniform high target clarity across the grid. The robot first moves around in the upper half and the clarity deficit approaches zero; the objective provides no further incentive to remain there, and additionally there is no decay here so it will stay that way; the robot then moves and stays in the bottom half, which has high decay.
\paragraph{Environment~7:}
This environment as shown in Fig.~\ref{fig:Exp6_9} has a decay field that has a high decay in the upper half and no decay in the lower half. and has uniform high target clarity across the grid. The robot first moves around in the lower half and the clarity deficit approaches zero; the objective provides no further incentive to remain there, and additionally there is no decay here so it will stay that way; the robot then moves and stays in the upper half, which has high decay.
\paragraph{Environment~8:}
This environment contains decaying and non decaying regions, as shown in Fig.~\ref{fig:Exp6_9}. The top right and bottom left quadrants have a target clarity of 1, while elsewhere it is 0. Decay is present only in the top right region. The robot first moves to the bottom-left quadrant, minimizing the target clarity deficit. It then moves to the top-right, where decay causes clarity to decrease over time. The robot stays in this region to minimize the clarity deficit, illustrating how the planner adapts to temporal decay.Also the robot does not spend much time in other two quadrants.
\paragraph{Environment~9:}
    This environment contains variable decaying and non decaying regions, as shown in Fig.~\ref{fig:Exp6_9}. The top half and bottom left quadrant have a target clarity of 1, while elsewhere it is 0. Decay is present in higher is top right and then lower in top left and then 0 elsewhere. The robot first moves to the bottom-left quadrant, minimizing the target clarity deficit. It then moves to the top-half, where decay causes clarity to decrease over time. The robot moves between quadrants in this region to minimize the clarity deficit.Also the robot does not spend much time in bottom right quadrant.
\paragraph{Environment~10:}
    This environment has variable decay across the environment, as shown in Fig.~\ref{fig:Exp10_12}. The target clarity is also complex and varies through the environment and is higher in some parts. The robot first moves between the regions of high target clarity, minimizing the target clarity deficit. It then moves to the areas, where decay causes clarity to decrease over time. Also the robot does not spend much time in low target clarity regions.
\paragraph{Environment~11:}
    This environment has variable decay across the environment, as shown in Fig.~\ref{fig:Exp10_12}. The target clarity is also complex patchy pattern that varies through the environment and is higher in some parts. The robot first moves among regions of high target clarity, minimizing the target clarity deficit. It then moves to the areas, where decay causes clarity to decrease over time.The robot does not spend much time in low target clarity regions.
\paragraph{Environment~12:}
    This environment has variable decay across the environment, as shown in Fig.~\ref{fig:Exp10_12}. The target clarity is also complex patchy pattern that varies through the environment and is higher in some parts. The robot first moves among regions of high target clarity, minimizing the target clarity deficit. It then moves to the areas, where decay causes clarity to decrease over time.The robot does not spend much time in low target clarity regions.
\paragraph{Environments~13,~14,~15,~16:}
    These environments include obstacles near regions of high target clarity in some of the previous environments, as shown in Fig.~\ref{fig:Exp13_16}. They evaluate the planner’s ability to safely explore informative areas separated by obstacles. The robot successfully navigates around obstacles without collisions or deadlocks, adjusting its motion based on the clarity deficit and visiting high-clarity regions as needed. The integrated gating mechanism ensures that all trajectories remain within the safe set, maintaining safety while the planner adapts its motion to minimize the clarity deficit in the presence of obstacles.

\end{document}